\documentclass{article}

 \usepackage[preprint]{neurips_2026}

\usepackage[utf8]{inputenc} 
\usepackage[T1]{fontenc}    
\usepackage{hyperref}       
\usepackage{url}            
\usepackage{booktabs}       
\usepackage{amsfonts}       
\usepackage{nicefrac}       
\usepackage{microtype}      
\usepackage{xcolor}         

\usepackage{subcaption}
\usepackage{graphicx}
\usepackage{array}
\usepackage{amsmath}
\usepackage{amssymb}
\usepackage{mathtools}
\usepackage{amsthm}
\usepackage{algorithm}
\usepackage{algpseudocode}
\usepackage{tabularx}
\usepackage{float}
\usepackage{multirow}
\usepackage{wrapfig}
\usepackage{enumitem}

\title{TARO: Temporal Adversarial Rectification Optimization Using Diffusion Models as Purifiers}

\author{
  Daniel Wesego \\
  Department of Computer Science \\
  University of Illinois Chicago \\
  Chicago, IL 60607 \\
  \texttt{dweseg2@uic.edu} \\
  \And
  Pedram Rooshenas \\
  Department of Computer Science \\
  University of Illinois Chicago \\
  Chicago, IL 60607 \\
  \texttt{pedram@uic.edu}
}

\newcommand{\eat}[1]{}

\newtheorem{proposition}{Proposition}

\begin{document}

\maketitle

\begin{abstract}
Adversarial purification with diffusion models seeks to project adversarial examples back toward the data manifold, but balancing semantic preservation and robustness against adaptive attacks remains challenging. Recent work shows that
standard diffusion purification can fail under adaptive evaluation, while test-time score-based optimization is more resilient. Existing optimization defenses, however, typically rely on a single diffusion noise regime or treat timesteps uniformly, overlooking the distinct roles of coarse and fine denoising scales. We propose Temporal Adversarial Rectification Optimization (TARO), an inference-time purification method that builds a temporally guided score prior from multiple denoising views along the diffusion trajectory. TARO forms a coarse-to-fine residual target: high-noise experts provide globally smoothed structure with reduced adversarial sensitivity, while low-noise experts restore image-specific, class-relevant details. A guidance strength controls this temporal correction, allowing TARO to balance robust global rectification with semantic preservation. Empirically, TARO improves robust accuracy across datasets and adaptive threat models in a zero-shot setting, while remaining compatible with complementary adversarial-likelihood objectives for further robustness gains.
\end{abstract}

\section{Introduction}
\label{sec:intro}

Deep neural networks remain highly susceptible to adversarial examples, where
small, often imperceptible perturbations induce erroneous predictions under
norm-bounded input changes~\citep{szegedy2013intriguing,goodfellow2015Harnessing}.
Adversarial training is the dominant empirical defense, but its robustness is
typically tied to the perturbation family used during training, it substantially
increases training cost, and it often induces a clean-accuracy tradeoff
~\citep{madry2018Resistant,schlarmann2024robust}. These limitations have
motivated adversarial purification, an inference-time defense that preprocesses
a potentially adversarial input before classification by mapping it back toward
the natural-image manifold.

Diffusion models provide a natural prior for adversarial purification because
their denoising dynamics encode image structure across a continuum of noise
scales. Standard diffusion purifiers inject Gaussian noise into the input to
attenuate adversarial structure and then reconstruct the image through a
pretrained reverse process~\citep{song2021SDE,ho2020denoising,Nie2022DiffPure}.
This strategy is attractive because the purifier can be used with a fixed
downstream classifier. However, the diffusion time axis introduces an intrinsic
robustness--fidelity tension. Low-noise denoising preserves local image content
but may retain adversarial artifacts, while high-noise denoising suppresses
stronger perturbations but may discard class-relevant details or drift toward an
incorrect semantic mode. Prior guided purification methods attempt to steer this
reverse process using auxiliary classifiers, distance metrics, or other guidance
objectives~\citep{wang2022guided}. Yet recent adaptive evaluations have shown
that robustness claims for stochastic purification can be substantially
overstated when attacks fail to differentiate through the purification process
or fail to account for stochasticity~\citep{LeeK23_rob_eval,kassis2025diffbreak}.

A recent alternative is to cast purification as test-time optimization. Rather
than relying only on a fixed reverse diffusion trajectory, these methods
directly update the input by optimizing an objective induced by a diffusion
prior~\citep{zhang2023scoreopt,chen_robclsDiff}. Adversary-aware optimization
further strengthens this paradigm by augmenting the clean-image prior with an
explicit likelihood model for adversarial residuals~\citep{wesego2025adversary}.
Despite their empirical strength, existing optimization-based purifiers still
leave two structural limitations. First, they typically use a single diffusion
noise level at each update, collapsing the multi-scale information encoded by
the diffusion model into a single local estimate. Second, adversary-aware
variants require an auxiliary perturbation model trained from generated
adversarial examples, making the defense dependent on the attack distribution
used during training.

We propose \emph{Temporal Adversarial Rectification Optimization} (TARO), a
test-time purification framework that treats diffusion timesteps as
scale-dependent denoising experts. Different noise regimes provide
complementary information: higher-noise denoisers recover globally smoothed
structure with reduced sensitivity to fragile adversarial components, while
lower-noise denoisers preserve image-specific, class-relevant details.
Motivated by a temporal product-of-experts view of diffusion scores, TARO
constructs an affine coarse-to-fine target from multiple denoising views,
allowing the purifier to combine robust global structure with fine semantic
detail instead of committing to a single point on the robustness--fidelity
spectrum.

This temporal construction gives TARO a zero-shot rectification signal from the
pretrained clean-image diffusion model itself. Empirically, this zero-shot
setting is particularly effective under \(\ell_\infty\) adaptive attacks, where
TARO improves over prior diffusion purification and test-time optimization
baselines while preserving high clean accuracy. At the same time, our results
show that \(\ell_2\) perturbations are more challenging for raw zero-shot
temporal rectification: smoother or more spatially distributed distortions can
benefit from additional residual information or stronger stabilization. TARO is
therefore complementary to adversary-aware likelihood models such as AAOpt~\citep{wesego2025adversary}:
when an adversarial residual model is available, replacing the single-scale
diffusion prior with TARO's temporal prior further improves robustness.

Reliable evaluation is equally important for this class of defenses.
Optimization-based purifiers contain inner optimization loops, and attacks that
remove these loops, omit higher-order gradient tracking, or fail to account for
stochasticity may evaluate an easier surrogate defense rather than the actual
purifier. We therefore use an optimization-aware evaluation protocol that
attacks the full purifier--classifier composition, accounts for stochasticity
through expectation over transformation, and preserves gradient flow through
the inner loop when the optimization is evaluated. This protocol also
reveals that improperly differentiated optimization-based purification can
produce severely inflated robustness estimates, motivating our use of strong and properly applied adaptive attacks throughout the paper.

Our contributions are:
\begin{itemize}[leftmargin=1.3em, itemsep=0.5pt, topsep=2pt]
    \item We introduce TARO, a test-time purifier that aggregates multiple
    diffusion noise regimes into an affine coarse-to-fine denoising target,
    combining robust high-noise structure with fine-scale semantic preservation.

    \item We interpret TARO as an affine extension of temporal
    product-of-experts aggregation, with a correction strength controlling the
    fine-to-coarse balance, and provide risk and local Gaussian interpretations.

    \item We evaluate TARO under PGD-EOT, BPDA-EOT, AutoAttack-EOT, and
    DiffBreak. TARO improves zero-shot \(\ell_\infty\) robustness and further
    improves robustness with adversary-aware likelihoods or consistency
    regularization.

    \item We provide a optimization-aware evaluation study showing that reliable
    adaptive evaluation of optimization-based purifiers requires preserving
    higher-order gradients.
\end{itemize}


\section{Related Work}
\label{sec:related_works}

Adversarial purification is an inference-time defense motivated by the cost and
threat-model specificity of adversarial training. It preprocesses a potentially
corrupted input before classification by mapping it back toward the clean data
manifold. Diffusion purification methods, such as DiffPure~\citep{Nie2022DiffPure},
inject noise and solve a reverse denoising process to remove adversarial
structure. While effective against some attacks, standard reverse-sampling
purification faces a robustness--fidelity tradeoff: low-noise denoising may
preserve adversarial artifacts, whereas high-noise denoising can distort
class-relevant semantics.

Test-time optimization methods address this issue by directly updating the
input using a pretrained diffusion prior. Score-Opt~\citep{zhang2023scoreopt}
and LM~\citep{chen_robclsDiff} optimize score-induced objectives that move adversarially perturbed input 
\(\mathbf{x}_{\mathrm{adv}}\) toward high-density regions of the clean data
distribution. AAOpt~\citep{wesego2025adversary} strengthens this MAP view by
adding a learned adversarial likelihood
\(\log p_{\phi}(\mathbf{x}_{\mathrm{adv}}\mid\mathbf{x})\), allowing the
purifier to model structured residual artifacts. However, these methods
typically use a single score or noise scale at each update, and adversary-aware
variants require a dedicated perturbation model trained on generated attacks.
TARO instead uses multiple diffusion noise regimes at test time and can be used
either as a zero-shot temporal prior or combined with adversary-aware
likelihoods.

Robust evaluation of purification defenses is challenging because stochastic
and optimization-based purifiers can lead to misleading gradients. Prior work
has shown that diffusion purification robustness can be overestimated when
attacks are weak, non-adaptive, or fail to account for stochasticity and
differentiability through the defense~\citep{LeeK23_rob_eval,diffattacks,
kassis2025diffbreak}. DiffBreak~\citep{kassis2025diffbreak} further showed that
low-frequency, perceptually guided perturbations can collapse standard
diffusion purification. We follow adaptive evaluation best practices, including
EOT, attacks on the full purifier--classifier composition, and graph-consistent
evaluation for optimization-based purifiers whose inner loops must remain
exposed to the attacker.

Diffusion guidance provides a related algebraic perspective. Classifier
guidance modifies the score using an auxiliary classifier~\citep{dhariwal2021diffusion},
while classifier-free guidance combines conditional and unconditional denoising
estimates~\citep{ho2022classifier}. Self-Guidance~\citep{epstein2023diffusion}
and Auto-Guidance~\citep{karras2024guiding} derive guidance from internal
representations or contrasts between denoisers. These methods primarily target
controllability or sample quality in generation. In contrast, TARO uses
multi-scale denoising estimates from different diffusion noise regimes as
temporal experts for adversarial posterior rectification.

\section{Method}
\label{sec:method}

Let $\mathbf{x}_{\mathrm{adv}}\in\mathbb{R}^{d}$ be an adversarially perturbed
input and let $\mathbf{x}$ denote the unknown clean image we aim to recover.
We pose purification as explicit posterior inference:
\begin{align}\label{eq:map}
 \arg\max_{\mathbf{x}} \log p(\mathbf{x}\mid \mathbf{x}_{\mathrm{adv}}) = \arg\min_{\mathbf{x}}
\underbrace{\mathcal{L}_{\mathrm{fid}}(\mathbf{x},\mathbf{x}_{\mathrm{adv}})}
_{\text{fidelity / corruption model}}
+
\lambda
\underbrace{\mathcal{R}(\mathbf{x})}
_{\text{image prior}} .
\end{align}
In the simplest setting, we use a Gaussian corruption model,
\(
\mathcal{L}_{\mathrm{fid}}(\mathbf{x},\mathbf{x}_{\mathrm{adv}})
=
\frac{1}{2\sigma_{\mathrm{fid}}^2}
\|\mathbf{x}-\mathbf{x}_{\mathrm{adv}}\|_2^2 .
\)
More generally, $\mathcal{L}_{\mathrm{fid}}$ can be any differentiable negative
log-likelihood, including data-driven corruption models that learn
$p_\phi(\mathbf{x}_{\mathrm{adv}}\mid \mathbf{x})$~\citep{wesego2025adversary}.
In that case, we set
\(\mathcal{L}_{\mathrm{fid}}(\mathbf{x},\mathbf{x}_{\mathrm{adv}})
=
-\log p_\phi(\mathbf{x}_{\mathrm{adv}}\mid \mathbf{x}) .
\)

The MAP objective in Eq.~\eqref{eq:map} follows the standard inverse-problem
view of purification. RED replaces an explicit prior with a denoiser-induced
regularizer
\(\mathcal R_{\mathrm{RED}}(\mathbf{x})=\frac12\|\mathbf{x}-D(\mathbf{x})\|_2^2\).
For a diffusion denoiser, Tweedie's formula gives
\(D_\theta(\mathbf{y},\sigma_t)=\mathbf{y}+\sigma_t^2\nabla_{\mathbf{y}}\log
p_{\sigma_t}(\mathbf{y})\), linking denoising to score ascent. We use this
connection to build a temporal product-of-experts prior over multiple noise
levels.


\subsection{Temporal PoE and Affine Temporal Aggregation}

Let $\{p_t(\mathbf{x})\}_{t\in\mathcal T}$ denote the diffusion-induced
smoothed data distributions indexed by timesteps
$\mathcal T=[t_{\min},t_{\max}]$. We define a continuous-time temporal
product-of-experts prior:
\begin{equation}
\log p_{\mathrm{TARO}}(\mathbf{x})
=
\int_{\mathcal T} w(t)\log p_t(\mathbf{x})\,dt
-
\log Z .
\end{equation}
The function $w(t)$ controls how much each noise regime contributes to the
effective prior. For example, placing more mass on small $t$ emphasizes
fine-scale fidelity, while placing more mass on large $t$ emphasizes stronger
global correction and robustness.

By Tweedie's formula, the score of the continuous-time PoE prior can be written as
\(
\nabla_{\mathbf{x}}\log p_{\mathrm{TARO}}(\mathbf{x})
=
\int_{\mathcal T}
w(t)
\frac{D_\theta(\mathbf{x},\sigma_t)-\mathbf{x}}{\sigma_t^2}
\,dt .
\)
Defining the variance-corrected weight
$\tilde w(t)=w(t)/\sigma_t^2$, this becomes
\(
\nabla_{\mathbf{x}}\log p_{\mathrm{TARO}}(\mathbf{x})
=
\int_{\mathcal T}
\tilde w(t)
\left(D_\theta(\mathbf{x},\sigma_t)-\mathbf{x}\right)
\,dt .
\)
We approximate the continuous temporal
prior using a sparse $K$-expert quadrature:
\(\tilde w(t)\,dt
\approx
\sum_{k=1}^K \hat\gamma_k \delta(t-t_k).
\)
This gives
\(\nabla_{\mathbf{x}}\log p_{\mathrm{TARO}}(\mathbf{x})
\approx
\sum_{k=1}^K
\hat\gamma_k
\left(
D_\theta(\mathbf{x},\sigma_{t_k})-\mathbf{x}
\right).
\)
Letting \(
\Lambda=\sum_{k=1}^K\hat\gamma_k\)
and \(
\alpha_k=\frac{\hat\gamma_k}{\Lambda}
\), we obtain
\begin{equation}
\nabla_{\mathbf{x}}\log p_{\mathrm{TARO}}(\mathbf{x})
\approx
\Lambda
\left(
\sum_{k=1}^K \alpha_k D_\theta(\mathbf{x},\sigma_{t_k})
-
\mathbf{x}
\right)
=
\Lambda(\bar D_\alpha(\mathbf{x})-\mathbf{x}),
\end{equation}
where we define the temporal consensus denoiser
\(
\bar D_\alpha(\mathbf{x})
\triangleq
\sum_{k=1}^K
\alpha_kD_\theta(\mathbf{x},\sigma_{t_k}).
\) 

Standard PoE requires nonnegative expert weights, which yields a convex
interpolation of the temporal denoisers. TARO relaxes this convexity constraint
by allowing the effective aggregation weights to be negative while preserving
\(\sum_k\alpha_k=1\). When some \(\alpha_k<0\), the temporal target becomes an
affine extrapolation of the experts, enabling coarse-to-fine corrections beyond
standard convex aggregation while retaining the same denoiser-aggregation form,
\(\bar D_\alpha(\mathbf{x})\). A local Gaussian interpretation of this affine
temporal aggregation is provided in Appendix~\ref{sec:appx_affine_gaussian}.

\paragraph{Choice of temporal experts.}
The discrete experts $\{t_k\}_{k=1}^K$ are interpreted as a sparse quadrature of
the continuous temporal prior. 
The theory only requires ordered timesteps
\(t_{\min}\le t_1<\cdots<t_K\le t_{\max}\), which may be chosen by any monotone
schedule \(t_k=\psi(k;t_{\min},t_{\max},K)\), such as linear, logarithmic, or
problem-specific spacing. In our sampled-time implementation, we generate
effective timesteps by
\(t_k=\mathrm{clip}(\sqrt{t\,\kappa_k},t_{\min},t_{\max})\), with
\(\kappa_1<\cdots<\kappa_K\). We evaluate both \(K=2\) and \(K=3\): \(K=2\) uses fine and coarse experts,
while \(K=3\) adds a middle expert to stabilize the coarse-to-fine transition.

\subsection{Optimization Algorithm}
Algorithm~\ref{alg:test_time_opt_taro1} gives the score-prior-only TARO update.
At each iteration, we sample a base timestep, construct \(K\) effective
timesteps, and evaluate the denoiser at each scale. TARO anchors the target at
the coarsest expert \(u_K\) and adds lower-noise residual corrections,
\(
\mathbf{x}_{\mathrm{target}}
=
u_K+\sum_{j=1}^{K-1}\gamma^{K-j}(u_j-u_{j+1}),
\)
where \(u_j-u_{j+1}\) transfers information from a finer expert to the next
coarser one. The image is updated toward this target with the RED-style
fixed-target loss:
\(
\ell=\|\mathrm{sg}(\mathbf{x}_{\mathrm{target}})-\mathbf{x}_i\|_2^2.
\)
Here, \(\mathrm{sg}(\cdot)\) treats \(\mathbf{x}_{\mathrm{target}}\) as fixed
only during the purifier update and makes the optimization efficient similar to ~\citet{wesego2025adversary}. But to show that are our improvements are not coming from the effects this, we have added additional ablations in Section~\ref{sec:grad_flow_sanity}. Fidelity or adversary-aware
likelihood terms can be added to recover the full MAP objective.

\begin{algorithm}[!h]
  \caption{Test-time Optimization Algorithm with TARO}
  \label{alg:test_time_opt_taro1}
  \begin{algorithmic}
    \State {\bfseries Input:} Input image \(\mathbf{x}_{\mathrm{adv}}\); pretrained score denoiser \(D_\theta\); guidance strength \(\gamma\); iterations \(M\); timestep range \([t_{\min},t_{\max}]\); timestep sampling schedule; noise-level schedule \(\{\sigma_t\}_{t=1}^{T}\); ordered scale multipliers \([\kappa_1,\dots,\kappa_K]\)
    \State \(\mathbf{x}_0 \leftarrow \mathbf{x}_{\mathrm{adv}}\)
    \For{\(i=0\) {\bfseries to} \(M-1\)}
        \State Sample base timestep \(t\) from the predefined timestep schedule
        \For{\(j=1\) {\bfseries to} \(K\)}
            \State 
            \(
            t_j \leftarrow \mathrm{clip}\!\left(\sqrt{t\,\kappa_j},t_{\min},t_{\max}\right)
            \)\Comment{Set effective timestep}
            \State Sample \(\boldsymbol{\epsilon}_j \sim \mathcal N(\mathbf 0,\mathbf I)\)
            \State 
            \(
            \mathbf{u}_j \leftarrow
            D_\theta\!\left(
            \mathbf{x}_i+\sigma_{t_j}\boldsymbol{\epsilon}_j,\sigma_{t_j}
            \right)
            \) \Comment{Compute temporal expert} 
        \EndFor
        \State 
        \(
        \mathbf{x}_{\mathrm{target}}
        \leftarrow
        \mathbf{u}_K+
        \sum_{j=1}^{K-1}\gamma^{K-j}(\mathbf{u}_j-\mathbf{u}_{j+1})
        \) \Comment{Form TARO target}
        \State 
        \(
        \ell \leftarrow
        \|\mathrm{sg}(\mathbf{x}_{\mathrm{target}})-\mathbf{x}_i\|_2^2
        \) \Comment{Calculate score-prior loss}
        \State 
        \(
        \mathbf{x}_{i+1}
        \leftarrow
        \mathrm{OptimizerUpdate}(\mathbf{x}_i,\ell)
        \) \Comment{Update image 
    \EndFor}
    \State {\bfseries Return:} Final image \(\mathbf{x}_M\)
  \end{algorithmic}
\end{algorithm}

\subsection{Guidance Interpretation of Affine Temporal Extrapolation}
\label{subsec:guidance}

The affine temporal aggregation used by TARO admits a guidance-style interpretation ~\citep{ho2022classifier,karras2024guiding}.
Let \(t_f<t_c\), where \(t_f\) denotes a lower-noise fine scale and
\(t_c\) denotes a higher-noise coarse scale. We write
\(
u_f=D_\theta(x_{t_f},\sigma_{t_f})\) and \(
u_c=D_\theta(x_{t_c},\sigma_{t_c}).
\)
The coarse expert \(u_c\) provides a stronger global projection and tends to
suppress adversarial artifacts, but may introduce semantic drift or remove fine
details. The fine expert \(u_f\) is more input-grounded and better preserves
details, but may retain part of the adversarial structure. TARO uses the
temporal discrepancy \(u_f-u_c\) as a correction direction.

For \(K=2\), the TARO target is
\(
u_\gamma
=
u_c+\gamma(u_f-u_c)
=
\gamma u_f+(1-\gamma)u_c .
\)
When \(0\le\gamma\le1\), this is a convex interpolation between fine and coarse
experts. When \(\gamma>1\), it becomes a guidance-style extrapolation that
amplifies the fine-minus-coarse correction. This has the same algebraic form as
guidance methods such as CFG and Auto-Guidance, which also scale a discrepancy
between two predictions. However, the source and purpose of the discrepancy are
different: in TARO, both predictions are produced by the same diffusion denoiser
at different temporal noise regimes, and the extrapolated target is used for
test-time MAP adversarial purification rather than conditional sample
generation. Thus, TARO should be viewed as a temporal affine extrapolation with a
guidance-style form: the resemblance to guidance is algebraic, while the
correction direction is defined by fine-to-coarse denoising disagreement for
adversarial rectification.



\subsection{Risk Interpretation of Temporal Consensus}
\label{subsec:risk_interpretation}

The PoE derivation motivates aggregating temporal denoising experts. Our
implemented TARO target uses a residual coarse-to-fine correction controlled by
\(\gamma\). We therefore analyze the risk of the guided fine-to-coarse target
directly, which makes the role of \(\gamma\) explicit.

\begin{proposition}[Risk of the guided fine-to-coarse target]
\label{prop:gamma_guided_risk}
Let \(x_{t_f}=x+\sigma_{t_f}\epsilon_f\) and
\(x_{t_c}=x+\sigma_{t_c}\epsilon_c\). We use \(u_f=D_\theta(x_{t_f},\sigma_{t_f})\) and
\(u_c=D_\theta(x_{t_c},\sigma_{t_c})\) to denote the fine and coarse temporal experts,
where \(t_f<t_c\). Suppose
\(
u_f=x^\star+b_f(x^\star)+\xi_f\) and 
\(u_c=x^\star+b_c(x^\star)+\xi_c
\)
with
\(\mathbb E[\xi_f\mid x^\star]=\mathbb E[\xi_c\mid x^\star]=0\).
Define the TARO guided target
\(
u_\gamma=u_c+\gamma(u_f-u_c).
\) Then
\[
u_\gamma
=
x^\star+
\underbrace{b_c+\gamma(b_f-b_c)}_{b_\gamma}
+
\underbrace{\xi_c+\gamma(\xi_f-\xi_c)}_{\xi_\gamma}, \;\;\text{and} \]
\[
\mathbb E[\|u_\gamma-x^\star\|_2^2\mid x^\star]
=
\|b_c+\gamma(b_f-b_c)\|_2^2
+
\mathbb E[\|\xi_c+\gamma(\xi_f-\xi_c)\|_2^2\mid x^\star].
\]
\end{proposition}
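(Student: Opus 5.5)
The proof is a direct algebraic substitution followed by a bias--variance expansion conditioned on \(x^\star\).

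First, substitute the assumed decompositions \(u_f=x^\star+b_f+\xi_f\) and \(u_c=x^\star+b_c+\xi_c\) into \(u_\gamma=u_c+\gamma(u_f-u_c)\). In the difference \(u_f-u_c=(b_f-b_c)+(\xi_f-\xi_c)\) the common \(x^\star\) cancels. Adding this to \(u_c\) gives
\[
u_\gamma=x^\star+\bigl(b_c+\gamma(b_f-b_c)\bigr)+\bigl(\xi_c+\gamma(\xi_f-\xi_c)\bigr)=x^\star+b_\gamma+\xi_\gamma,
\]
which is the first displayed identity. Only linearity is used here, so it holds for every real \(\gamma\), including the extrapolative regime \(\gamma>1\).

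Second, write \(u_\gamma-x^\star=b_\gamma+\xi_\gamma\) and expand the squared norm:
\[
\|b_\gamma\|_2^2+2\langle b_\gamma,\xi_\gamma\rangle+\|\xi_\gamma\|_2^2 .
\]
Then take the conditional expectation given \(x^\star\).

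\begin{itemize}
\item The term \(b_\gamma=b_\gamma(x^\star)\) is a deterministic function of \(x^\star\), with \(\gamma\) fixed. Its squared norm therefore passes through the conditional expectation unchanged.
\item For the cross term, \(b_\gamma\) can be pulled out of the conditional expectation, giving \(2\langle b_\gamma,\mathbb E[\xi_\gamma\mid x^\star]\rangle\). By linearity, \(\mathbb E[\xi_\gamma\mid x^\star]=(1-\gamma)\mathbb E[\xi_c\mid x^\star]+\gamma\,\mathbb E[\xi_f\mid x^\star]=0\) under the zero-mean hypotheses, so the cross term vanishes.
\item What remains is \(\mathbb E[\|\xi_\gamma\|_2^2\mid x^\star]\), which completes the risk identity.
\end{itemize}

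The only real obstacle is justifying the cross-term step. It requires \(b_f,b_c\) to be measurable functions of \(x^\star\), as the notation \(b_f(x^\star)\) indicates. It also requires \(\xi_f,\xi_c\) to have finite conditional second moments, so that all expectations exist and linearity of conditional expectation applies. I will state these as implicit standing assumptions.

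Notably, the argument never needs independence or uncorrelatedness of \(\xi_f\) and \(\xi_c\); the two share the input \(x\) and may be correlated. Any such correlation is simply kept inside the variance term \(\mathbb E[\|\xi_c+\gamma(\xi_f-\xi_c)\|_2^2\mid x^\star]\). Expanding that term, if desired, gives a quadratic in \(\gamma\) involving \(\mathrm{Cov}(\xi_f-\xi_c,\xi_c)\), but the proposition does not require this expansion.
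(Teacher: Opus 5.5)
Your proposal is correct and matches the paper's proof step for step: you substitute the decompositions to get $u_\gamma=x^\star+b_\gamma+\xi_\gamma$, expand $\|b_\gamma+\xi_\gamma\|_2^2$, and eliminate the cross term using $\mathbb E[\xi_\gamma\mid x^\star]=0$. Your added remarks are sound refinements the paper leaves implicit: the measurability of $b_f,b_c$ in $x^\star$, finite conditional second moments, and the fact that no independence between $\xi_f$ and $\xi_c$ is needed.
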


Proof of Proposition~\ref{prop:gamma_guided_risk} is provided in
Appendix~\ref{appx:proof}.

Proposition~\ref{prop:gamma_guided_risk} shows that the correction strength
\(\gamma\) changes both the effective bias \(b_\gamma\) and the residual
variation \(\xi_\gamma\) of the temporal target. Small \(\gamma\) keeps
\(u_\gamma\) close to the coarse expert \(u_c\), which can inherit
oversmoothing bias and remove class-relevant details. Larger \(\gamma\)
increases the influence of the fine expert \(u_f\) through the correction
\(u_f-u_c\), restoring image-specific structure but potentially amplifying
cross-timestep residual variation. This motivates selecting \(\gamma\) on
validation data and using expert consistency regularization when stronger
fine-expert correction becomes unstable.

This interpretation also helps explain the attack-dependent behavior observed in
our ablations. For \(\ell_\infty\) attacks, moderate fine-to-coarse correction
is effective, but overly large \(\gamma\) can over-amplify temporal residuals
and reduce robustness. For \(\ell_2\) attacks, smoother and more spatially
distributed distortions are less fully corrected by the coarse expert alone, so
stronger fine-expert correction is beneficial. In this regime, consistency
regularization becomes especially useful because it stabilizes the amplified
cross-timestep discrepancy.

\paragraph{Expert consistency regularization.}
Because larger \(\gamma\) amplifies the temporal correction \(u_f-u_c\), it can
also amplify unstable cross-timestep disagreement. To stabilize the guided
target, we augment the score-prior objective with an expert-consistency
regularizer:
\(
\mathcal L_{\mathrm{cons}}
=
\sum_{i<j}\|u_i-u_j\|_2^2,
\)
where \(u_i\) and \(u_j\) are denoised outputs from different temporal experts.
The full TARO prior loss becomes
\(
\mathcal L_{\mathrm{prior}}
=
\|\mathrm{sg}(x_{\mathrm{target}})-x\|_2^2
+
\lambda_{\mathrm{cons}}\mathcal L_{\mathrm{cons}}.
\)
The first term moves the current image toward the guided temporal target, while
\(\mathcal L_{\mathrm{cons}}\) encourages temporal experts to agree on the
purified estimate. From the risk view above, this regularizer is intended to
control unstable residual variation while preserving the bias-correction
benefit of stronger fine-scale guidance. This is particularly useful for
\(\ell_2\) attacks, where the fine expert remains important but the raw
cross-timestep residual benefits from stabilization. For \(\ell_\infty\)
attacks, where temporal discrepancy may already provide a useful correction
signal, excessive consistency can dampen that signal and provide less or no
benefit. See Appendix~\ref{sec:gamma_ablation} for the effect of consistency loss on purification against $\ell_2$ attacks.

\subsection{Using Data-Driven Adversarial Models}
\label{subsec:data_driven_adversarial_model}

The risk interpretation above suggests that purification can benefit from both
temporal correction and attack-specific residual modeling. Adversary-aware
methods such as AAOpt~\citep{wesego2025adversary} address the latter by
explicitly modeling the residual between the adversarial input and the purified
estimate. In particular, AAOpt augments the diffusion prior with a learned
perturbation likelihood by minimizing an objective of the form
\(
\mathcal L_{\mathrm{AAOpt}}(\mathbf{x})
=
-\log p_\theta(\mathbf{x})
-
\lambda
\log p_{\mathrm{pert}}(\mathbf{x}-\mathbf{x}_{\mathrm{adv}}).
\)
Here, \(p_{\mathrm{pert}}(\mathbf{x} -\mathbf{x}_{\mathrm{adv}})\) models the
likelihood of the residual between the current
purified estimate and the adversarial input.

TARO is complementary: it uses the zero-shot discrepancy
\(\Delta u=u_f-u_c\) between fine and coarse temporal experts as an internal
proxy for scale-sensitive adversarial structure. Thus, AAOpt provides an
external learned residual model, while TARO provides an internal temporal
rectification signal from the pre-trained diffusion model itself. The two can
be combined by replacing the single-scale diffusion prior in AAOpt with the
TARO temporal consensus prior (see Appendix~\ref{sec:appx_taro_aa}).

\section{Experiments} 
\label{sec:exp}

\paragraph{Setup.} We evaluate TARO on CIFAR-10, CIFAR-100, CIFAR-10-C, and Imagenette. CIFAR-10-C, Imagenette, and extended hyperparameter details are discussed in \hyperref[sec:appx]{Appendix}. We report clean accuracy on benign inputs and robust accuracy under adaptive attacks. Unless otherwise stated, TARO results are averaged over independent trials.

We compare against adversarial training (AT) baselines, including \cite{pang2022robustness}, \cite{gowal2020uncovering}, \cite{gowal2021improving}, and \cite{wang2023better}; standard adversarial purification (AP) methods, including \cite{Nie2022DiffPure}, \cite{yoon2021adp}, \cite{rob_diff_guang}, and \cite{li2025adbm}; and test-time optimization baselines, including LM~\citep{chen_robclsDiff}, Score-Opt~\citep{zhang2023scoreopt}, and AAOpt~\citep{wesego2025adversary}. For direct comparisons, we reimplement AAOpt using the same classifier, diffusion checkpoint, and attack settings as TARO.

We use pretrained EDM denoisers that predict the clean estimate
\(\hat x_0=D_\theta(x_t;\sigma)\), together with clean-trained WRN-28-10 and
WRN-70-16 classifiers. Thus, robustness improvements for AP methods come from
purification rather than adversarially trained classifiers. TARO-\(K\) denotes
aggregation over \(K\) diffusion noise scales, and TARO-\(K\)-AA denotes the
hybrid variant by updating AAOpt with the multi-scale temporal property of TARO.

\paragraph{Evaluation protocol.}
Because TARO is an optimization-based purifier, attacks must target the full
purifier--classifier composition and account for stochasticity with EOT. We use
PGD-EOT, BPDA-EOT, AutoAttack-EOT, and DiffBreak-style adaptive attacks. We also
provide an optimization-aware evaluation study in Appendix~\ref{sec:appx_proper_eval},~\ref{sec:appx_clipure_eval},
showing that ignoring higher-order gradients in optimization based purification can lead to severely inflated robustness estimates.

\subsection{Guidance Strength and Consistency Ablation}
\label{sec:gamma_ablation}

We first study the temporal correction strength \(\gamma\) in TARO-2.
Small \(\gamma\) keeps the target closer to the high-noise coarse expert, while
larger \(\gamma\) increases the contribution of the low-noise fine expert. We
also evaluate an expert-consistency regularizer that penalizes disagreement
among denoised timestep outputs, stabilizing the amplified temporal correction.
Table~\ref{tab:gamma_consistency_ablation_cifar10_pgd_eot} reports results on
CIFAR-10 with WRN-28-10 under PGD-EOT-20 for both \(\ell_\infty\)
\((\epsilon=8/255)\) and \(\ell_2\) \((\epsilon=0.5)\).

Table~\ref{tab:gamma_consistency_ablation_cifar10_pgd_eot} shows that the fine
expert plays an important role in both clean and robust accuracy. When \(\gamma<1\), clean accuracy and robust accuracy both drop, which is
consistent with the target remaining too close to the coarse expert and losing
class-relevant detail. Without consistency, \(\ell_\infty\) robustness peaks at
\(\gamma=1.5\), while larger values reduce robustness, suggesting that excessive correction can
over-amplify unstable temporal residuals. In
contrast, \(\ell_2\) robustness improves with stronger correction up to
\(\gamma=2.0\).

Consistency regularization is especially important for \(\ell_2\): at
\(\gamma=1.5\), it improves robust accuracy from \(69.33\%\) to \(89.05\%\).
This indicates that the fine-to-coarse correction is useful for \(\ell_2\)
attacks, but the raw cross-timestep discrepancy benefits from stabilization.
For \(\ell_\infty\), consistency has a smaller effect near the best
\(\gamma\), mainly stabilizing the large-\(\gamma\) regime.

\begin{table}[!h]
\centering
\caption{Guidance-strength and consistency ablation for TARO-2 on CIFAR-10
using WRN-28-10 under PGD-EOT-20. We report clean accuracy and robust accuracy
under \(\ell_\infty\) and \(\ell_2\) attacks.}
\label{tab:gamma_consistency_ablation_cifar10_pgd_eot}
\small
\setlength{\tabcolsep}{4pt}
\begin{tabular}{@{}c c c c c@{}}
\toprule
\(\gamma\) & Consistency & Clean Acc. &
\(\ell_\infty\) Robust Acc. &
\(\ell_2\) Robust Acc. \\
\midrule
0.5  & No  & 82.28\(\scriptstyle\pm0.7\) & 73.56\(\scriptstyle\pm2.2\) & 57.48\(\scriptstyle\pm0.6\) \\
0.5  & Yes & 81.83\(\scriptstyle\pm1.3\) & 73.82\(\scriptstyle\pm1.4\) & 75.51\(\scriptstyle\pm1.5\) \\
0.75 & No  & 85.73\(\scriptstyle\pm0.7\) & 79.55\(\scriptstyle\pm0.5\) & 61.00\(\scriptstyle\pm1.0\) \\
0.75 & Yes & 84.50\(\scriptstyle\pm0.9\) & 78.51\(\scriptstyle\pm0.8\) & 79.42\(\scriptstyle\pm0.6\) \\
1.0  & No  & 91.46\(\scriptstyle\pm0.1\) & 85.93\(\scriptstyle\pm0.4\) & 63.40\(\scriptstyle\pm1.5\) \\
1.0  & Yes & 89.71\(\scriptstyle\pm0.5\) & 84.43\(\scriptstyle\pm0.3\) & 85.28\(\scriptstyle\pm0.4\) \\
1.5  & No  & \textbf{93.09}\(\scriptstyle\pm0.4\) & {87.04}\(\scriptstyle\pm1.8\) & 69.33\(\scriptstyle\pm1.2\) \\
1.5  & Yes & 92.77\(\scriptstyle\pm0.7\) & \textbf{87.82}\(\scriptstyle\pm1.4\) & \textbf{89.05}\(\scriptstyle\pm0.5\) \\
2.0  & No  & 92.96\(\scriptstyle\pm0.6\) & 79.68\(\scriptstyle\pm2.3\) & 72.91\(\scriptstyle\pm2.3\) \\
2.0  & Yes & 92.83\(\scriptstyle\pm0.8\) & 80.90\(\scriptstyle\pm1.9\) & 86.39\(\scriptstyle\pm1.5\) \\
3.0  & No  & 91.46\(\scriptstyle\pm0.9\) & 55.79\(\scriptstyle\pm1.1\) & 66.27\(\scriptstyle\pm0.8\) \\
3.0  & Yes & 91.40\(\scriptstyle\pm1.3\) & 58.13\(\scriptstyle\pm0.7\) & 74.86\(\scriptstyle\pm0.7\) \\
\bottomrule
\end{tabular}
\end{table}

\subsection{Adaptive PGD-EOT Attacks}
\label{sec:pgd_attack}
We evaluate end-to-end robustness against adaptive Projected Gradient
Descent with Expectation over Transformation (PGD-EOT)~\citep{athalye2018obfuscated}.
The attack targets the full purifier--classifier composition and estimates
gradients through the stochastic purification process. We consider both
\(\ell_\infty\) and \(\ell_2\) perturbations with budgets
\(\epsilon=8/255\) and \(\epsilon=0.5\), respectively. Following
\citet{wesego2025adversary}, we use 20 attack iterations and a one-step
gradient approximation through the recurrent purification trajectory. Unlike
the original AAOpt evaluation, which uses 20 EOT samples for the attack and a
larger number of EOT samples for the defense, we use 20 EOT samples for both
attack and defense across TARO and all directly reimplemented baselines. This
gives a matched and computationally comparable evaluation setting. 

Table~\ref{tab:pgd_cifar10_linf} reports CIFAR-10 results under
\(\ell_\infty\) and \(\ell_2\) PGD-EOT using a WRN-28-10 classifier. Under
\(\ell_\infty\), TARO achieves the strongest robustness among the evaluated
purification methods while maintaining clean accuracy close to the undefended
classifier. Appendix~\ref{sec:appx_wrn_70_16} reports the corresponding
WRN-70-16 results, where TARO remains strong: TARO-2 achieves the best robust
accuracy at \(88.56\%\), and TARO-3 obtains a comparable robust accuracy of
\(88.37\%\).

Under \(\ell_2\), raw zero-shot TARO is weaker, but both TARO-AA and
consistency-regularized TARO recover strong robustness. In particular,
TARO-2 + Cons. achieves the best \(\ell_2\) robustness in
Table~\ref{tab:pgd_cifar10_linf}, suggesting that stabilizing the temporal
correction can be competitive with adding an adversary-aware likelihood in this
setting.


\begin{table}[!h]
\centering
\caption{Standard and robust accuracy on CIFAR-10 using WRN-28-10 under PGD-EOT-20. Baselines include adversarial training (AT), adversarial purification, and test-time optimization purification.}
\label{tab:pgd_cifar10_linf}
\small
\setlength{\tabcolsep}{3pt}

\begin{minipage}{0.49\textwidth}
\centering
\textbf{$\ell_{\infty}$ ($\epsilon = 8/255$)}

\begin{tabular}{@{}l l c c@{}}
\toprule
Type & Method & Clean & Robust \\
\midrule
& Default & 94.78 & 0.0 \\
\midrule
\multirow{3}{*}{AT} 
& \citet{pang2022robustness} & 88.62 & 64.95 \\
& \citet{gowal2020uncovering} & 88.54 & 65.10 \\
& \citet{gowal2021improving} & 87.51 & 66.01 \\
\midrule
\multirow{5}{*}{AP} 
& \citet{yoon2021adp} & 85.66 & 37.27 \\
& \citet{Nie2022DiffPure} & 91.41 & 51.25 \\
& Score-Opt-O & 91.21 & 64.96 \\
& Score-Opt-N & 94.43 & 65.62 \\
& \citet{rob_diff_guang} & 90.42 & 64.06 \\
\midrule
& AAOpt & 91.53$\scriptstyle\pm0.9$ & 85.12$\scriptstyle\pm0.7$ \\
\midrule
& TARO-2 & 93.41$\scriptstyle\pm1.1$ & 88.27$\scriptstyle\pm1.6$ \\
& TARO-3 & 93.45$\scriptstyle\pm0.9$ & \textbf{88.57}$\scriptstyle\pm0.9$ \\
& TARO-2-AA & 92.61$\scriptstyle\pm0.7$ & 86.91$\scriptstyle\pm0.9$ \\
& TARO-3-AA & 93.45$\scriptstyle\pm0.9$ & 87.46$\scriptstyle\pm0.7$ \\
\bottomrule
\end{tabular}
\end{minipage}
\hfill
\begin{minipage}{0.49\textwidth}
\centering
\textbf{$\ell_{2}$ ($\epsilon = 0.5$)}
  \begin{tabular}{@{}l l c c@{}}
  \toprule
  Type & Method & Clean & Robust \\
  \midrule
  & Default & 94.78 & 0.0 \\
  \midrule
  \multirow{3}{*}{AT} 
  & \citet{rebuffi2021fixing} & 91.79 & 85.05 \\
  & \citet{augustin2020adversarial} & 93.96 & 86.14 \\
  \midrule
  \multirow{5}{*}{AP} 
  & \citet{Nie2022DiffPure} & 91.41 & 82.11 \\
  & Score-Opt-O & 91.21 & 79.09 \\
  & Score-Opt-N & 94.43 & 84.86 \\
  & \citet{rob_diff_guang} & 90.42 & 85.55 \\
  \midrule
  & AAOpt & 91.48$\scriptstyle\pm0.7$ & 86.40$\scriptstyle\pm0.4$ \\
  \midrule
  & TARO-2 & 93.43$\scriptstyle\pm0.6$ & 69.21$\scriptstyle\pm0.7$ \\
  & TARO-3 & 93.55$\scriptstyle\pm1.1$ & 68.86$\scriptstyle\pm1.1$ \\
  & TARO-2-AA & 92.61$\scriptstyle\pm0.7$ & 88.19$\scriptstyle\pm1.2$ \\
  & TARO-3-AA & 93.47$\scriptstyle\pm0.9$ & {88.39}$\scriptstyle\pm1.0$ \\
  & TARO-2 + Cons. & 93.43$\scriptstyle\pm0.8$ & \textbf{88.99}$\scriptstyle\pm1.2$\\
  & TARO-3 + Cons. & 91.27$\scriptstyle\pm0.3$ & 86.97$\scriptstyle\pm0.3$\\
  \bottomrule
  \end{tabular}
\end{minipage}
\end{table}

\subsection{BPDA-EOT Attacks}
\label{sec:bpda_attack}
We next evaluate robustness against Backward Pass Differentiable Approximation
with EOT (BPDA-EOT)~\citep{athalye2018obfuscated}. BPDA-EOT tests defenses with
non-differentiable, stochastic, or approximate components by replacing the
backward pass with a differentiable surrogate while averaging gradients over
stochastic transformations. We use 50 BPDA iterations and 15 EOT samples under
an \(\ell_\infty\) perturbation budget of \(\epsilon=8/255\). This setting
tests whether TARO remains robust when the attack accounts for randomized
purification and approximate gradients. Tables~\ref{tab:bpda_eot_cif10}
and~\ref{tab:bpda_eot_cif100} summarize the BPDA-EOT results on CIFAR-10 and
CIFAR-100. On CIFAR-10, TARO-3-AA achieves the strongest robust accuracy while
maintaining high clean accuracy. On CIFAR-100, where classification is more
challenging and the data manifold is more complex, TARO-3-AA again improves
over AAOpt and Score-Opt baselines. These results support the claim that
temporal multi-scale purification provides a stronger optimization signal than
single-scale score guidance.

\begin{table}[!h]
  \centering
  \vspace{-0.15in}
  \small
  \setlength{\tabcolsep}{4pt}

  \parbox[t]{0.48\textwidth}{
  \centering
  \captionof{table}{BPDA+EOT attack on CIFAR-10 under $\ell_\infty$ perturbations ($\epsilon=8/255$), WRN-28-10.}
  \label{tab:bpda_eot_cif10}
  \begin{tabular}{@{}l c c@{}}
    \toprule
    Method & Clean & Robust \\
    \midrule
    Default & 94.78 & 0.0 \\ 
    \midrule
    \citet{wang2022guided} & 93.50 & 79.83 \\
    \citet{yoon2021adp} & 86.14 & 70.01 \\
    \citet{Nie2022DiffPure} & 89.02 & 81.40 \\
    Score-Opt-O & 90.23 & 81.36 \\
    Score-Opt-N & 93.94 & 90.07 \\
    ADBM & 91.93 & 70.51 \\
    \midrule
    AAOpt & 92.10$\scriptstyle\pm0.7$ & 91.36$\scriptstyle\pm0.8$ \\
    \midrule
    TARO-2 & 90.62$\scriptstyle\pm0.6$ & 88.90$\scriptstyle\pm0.6$ \\
    TARO-3 & 89.40$\scriptstyle\pm0.4$ & 86.87$\scriptstyle\pm0.6$ \\
    TARO-2-AA & 92.88$\scriptstyle\pm0.8$ & 92.49$\scriptstyle\pm0.8$ \\
    TARO-3-AA & 93.59$\scriptstyle\pm0.7$ & \textbf{93.04}$\scriptstyle\pm0.8$ \\
    \bottomrule
  \end{tabular}
  }
  \hfill
  \parbox[t]{0.48\textwidth}{
  \centering
  \captionof{table}{BPDA+EOT attack on CIFAR-100 under $\ell_\infty$ perturbations ($\epsilon=8/255$), WRN-28-10.}
  \label{tab:bpda_eot_cif100}
  \begin{tabular}{@{}l c c@{}}
    \toprule
    Method & Clean & Robust \\
    \midrule
    Default & 81.55 & 0.0 \\
    \midrule
    \citet{yoon2021adp} & 60.66 & 39.72 \\ 
    \citet{hill2021ebm} & 51.66 & 26.10 \\
    Score-Opt-O & 70.53 & 66.11 \\
    Score-Opt-N & 74.18 & 60.21 \\
    \midrule
    AAOpt & 69.68$\scriptstyle\pm1.6$ & 66.20$\scriptstyle\pm1.8$ \\
    \midrule
    TARO-2 & 73.78$\scriptstyle\pm1.5$ & 66.71$\scriptstyle\pm1.9$ \\
    TARO-3 & 72.61$\scriptstyle\pm1.8$ & 68.66$\scriptstyle\pm2.2$ \\
    TARO-2-AA & 72.76$\scriptstyle\pm1.9$ & 70.65$\scriptstyle\pm2.0$ \\
    TARO-3-AA & 74.72$\scriptstyle\pm1.5$ & \textbf{72.41}$\scriptstyle\pm1.7$ \\
    \bottomrule
  \end{tabular}
  }
\end{table}

\subsection{DiffBreak Low-Frequency Attacks}
\label{sec:diffbreak}
\vspace{-0.05in}
Recently, \citet{kassis2025diffbreak} introduced DiffBreak, an adaptive attack
designed specifically to bypass diffusion-based purification defenses. Unlike
standard \(\ell_p\)-bounded attacks, DiffBreak uses a low-frequency (LF)
perturbation strategy guided by perceptual similarity objectives such as LPIPS.
These perturbations remain closer to natural image statistics, making them
difficult for standard diffusion denoisers to remove. As shown in
Table~\ref{tab:lf_attk_cif10}, the evaluated standard diffusion purification
baselines collapse under this attack: both one-step denoising and DiffPure-EDM
achieve \(0.0\%\) robust accuracy. In contrast, AAOpt and TARO-2-AA remain
substantially more robust, with TARO-2-AA improving over AAOpt.

\begin{table}[!h]
\centering

\parbox[t]{0.48\textwidth}{
\centering
\captionof{table}{Robust accuracy on CIFAR-10 under the DiffBreak low-frequency (LF) attack with WRN-28-10 and EOT=20.}
\label{tab:lf_attk_cif10}
\small
\setlength{\tabcolsep}{5pt}
\begin{tabular}{@{}l c@{}}
\toprule
Model & Robust Accuracy \\
\midrule
1-step Denoise & 0.0$\scriptstyle\pm0.0$ \\
DiffPure-EDM & 0.0$\scriptstyle\pm0.0$ \\
AAOpt & 85.41$\scriptstyle\pm4.4$ \\
TARO-2-AA & \textbf{90.10}$\scriptstyle\pm1.5$ \\
\bottomrule
\end{tabular}
}
\hfill
\parbox[t]{0.48\textwidth}{
\centering
\captionof{table}{Robust accuracy on CIFAR-10 under transfer-DiffBreak low-frequency (LF) attack with similar setup as Table ~\ref{tab:lf_attk_cif10}}
\label{tab:transfer_lf_attk_cif10}
\small
\setlength{\tabcolsep}{5pt}
\begin{tabular}{@{}l c@{}}
\toprule
Model & Robust Accuracy \\
\midrule
DiffPure-EDM & 0.0$\scriptstyle\pm0.0$ \\
AAOpt & 93.2$\scriptstyle\pm2.9$ \\
TARO-2-AA & \textbf{94.7}$\scriptstyle\pm1.4$ \\
\bottomrule
\end{tabular}
}
\end{table}

As an additional sanity check, we also evaluate a transfer-DiffBreak setting. We
generate low-frequency adversarial examples against DiffPure-EDM using the same
diffusion model and evaluate the resulting examples on TARO with matched seed
settings. Table~\ref{tab:transfer_lf_attk_cif10} shows that TARO-2-AA remains
robust under transfer, again outperforming AAOpt. For the direct LF attack, we
use 20 EOT samples; due to the high computational overhead of sequential EOT
execution and 100 attack steps, we evaluate 64 samples averaged over 3 trials.


\subsection{Gradient-Flow Sanity Check}
\label{sec:grad_flow_sanity}

Because TARO uses a fixed-target loss with \(\mathrm{sg}(\cdot)\), we verify
that its robustness does not rely on gradient obstruction. Table ~\ref{tab:main_stopgrad_ablation}
reports an ablation that removes the stop-gradient operation from the temporal
target. TARO remains robust without this detachment: for example, TARO-3
achieves \(87.13\%\) robust accuracy without stop-gradient compared with
\(88.57\%\) with stop-gradient under CIFAR-10 \(\ell_\infty\) PGD-EOT. This
suggests that the gains come from temporal multi-scale rectification rather
than broken gradients. 
\begin{table}[!h]
\vspace{-0.1in}
\centering
\caption{Stop-gradient ablation on CIFAR-10 under PGD-EOT-20
\((\ell_\infty,\epsilon=8/255)\), WRN-28-10.}
\label{tab:main_stopgrad_ablation}
\small
\setlength{\tabcolsep}{6pt}
\begin{tabular}{@{}lcc@{}}
\toprule
Model & Stop-grad & No stop-grad \\
\midrule
TARO-2 & 88.27\(\scriptstyle\pm1.6\) & 86.51\(\scriptstyle\pm1.9\) \\
TARO-3 & \textbf{88.57}\(\scriptstyle\pm0.9\) & \textbf{87.13}\(\scriptstyle\pm1.1\) \\
TARO-2-AA & 86.91\(\scriptstyle\pm0.9\) & 85.86\(\scriptstyle\pm1.2\) \\
TARO-3-AA & 87.46\(\scriptstyle\pm0.7\) & 86.19\(\scriptstyle\pm1.1\) \\
\bottomrule
\end{tabular}
\end{table}

\subsection{Full-Reverse AutoAttack-EOT}
\label{sec:autoattack_eot}

We further evaluate TARO with AutoAttack-EOT by considering the full
reverse purification trajectory, rather than using the one-step approximation
from the main PGD-EOT evaluation. This provides a stronger adaptive sanity check
against incomplete-gradient effects. The attack targets the full
purifier--classifier pipeline and averages gradients over stochastic
purification samples. To reduce cost, we evaluate 128 samples. Table~\ref{tab:autoattack_eot_full_reverse} reports the results. TARO-2-AA achieves the strongest robustness under this stricter unrolled attack,
improving over AAOpt while preserving clean accuracy. These results indicate
that TARO's temporal prior remains effective under full adaptive evaluation
when integrated with adversary-aware optimization.

\begin{table}[!h]
\centering

\vspace{-0.1in}
\caption{AutoAttack-EOT with full reverse optimization on CIFAR-10, WRN-28-10.}
\label{tab:autoattack_eot_full_reverse}
\small
\setlength{\tabcolsep}{5pt}
\begin{tabular}{@{}l c c@{}}
\toprule
Model & Clean Acc. & Robust Acc. \\
\midrule
Default & 94.78 & 0.0 \\
Score-Opt-O & 89.84 & 74.22 \\
AAOpt & 89.06 & 85.54 \\
\midrule
TARO-2 & 90.62 & 71.09 \\
TARO-2-AA & 91.01 & \textbf{87.50} \\
\bottomrule
\end{tabular}
\end{table}

\section{Conclusion}
\label{sec:concl}

We proposed TARO, a test-time purification framework that treats diffusion
timesteps as complementary denoising experts. TARO builds an affine
coarse-to-fine target that combines robust global structure from high-noise
experts with class-relevant detail from low-noise experts. Empirically, TARO
improves robustness under strong adaptive attacks while maintaining high clean
accuracy. Zero-shot TARO is especially effective under \(\ell_\infty\) attacks,
while \(\ell_2\) robustness benefits from consistency regularization or
adversary-aware likelihoods. We also highlight the need for graph-consistent
adaptive evaluation of optimization-based purifiers.

\paragraph{Limitations.}
TARO increases inference latency due to iterative multi-scale optimization, and
its correction strength and stabilization regularization must be selected carefully for each threat model; see Appendix~\ref{sec:appx_limitations} for a
detailed discussion.



\bibliography{bibliography}
\bibliographystyle{plainnat}

\newpage
\appendix
\section{Appendix}
\label{sec:appx}
This appendix provides additional algorithmic, evaluation, and experimental details omitted from the main text due to space constraints. 
Appendix~\ref{sec:appx_affine_gaussian} provides a local Gaussian
interpretation of affine temporal extrapolation.
Appendix~\ref{appx:proof} shows proof of proposition 1, then Appendix~\ref{sec:appx_taro_aa} describes the hybrid TARO-AA variant that combines temporal multi-scale purification with the adversary-aware perturbation likelihood of AAOpt. Appendix~\ref{sec:appx_proper_eval} formalizes the evaluation protocol used for optimization-based purifiers, and Appendix~\ref{sec:appx_clipure_eval} presents a corrected-evaluation case study showing that improperly differentiated optimization loops can lead to severely inflated robustness estimates. Appendix~\ref{sec:appx_wrn_70_16} reports additional results using the WRN-70-16 classifier, and Appendix~\ref{sec:imagenette_results} reports results using the Imagenette dataset. Appendix~\ref{sec:time_scale_ablation} includes additional ablations on time scales, Appendix~\ref{sec:corr} reports the results against common corruptions using the CIFAR-10-C dataset, Appendix ~\ref{sec:compute} discusses the computation requirement and compares the wall clock time the algorithm takes, Appendix~\ref{sec:exp_details} lists the hyperparameters used in our experiments, and finally, Appendix~\ref{sec:appx_limitations} discusses the limitation in detail.

\subsection{Local Gaussian View of Affine Temporal Extrapolation}
\label{sec:appx_affine_gaussian}

Positive-weight temporal PoE motivates aggregating diffusion experts, but it
only yields interpolation. TARO uses a more general affine log-density
combination,
\[
\log p_{\mathrm{aff}}(x)
=
\sum_{k=1}^K \gamma_k\log p_{t_k}(x)-\log Z,
\qquad
\sum_{k=1}^K\gamma_k=1,
\]
where the coefficients need not be nonnegative. When all \(\gamma_k\ge0\), this
reduces to standard PoE interpolation. When some coefficients are negative, it
becomes a contrastive or ratio-of-experts prior. For \(K=2\),
\[
\log p_\gamma(x)
=
\gamma\log p_f(x)+(1-\gamma)\log p_c(x),
\]
and for \(\gamma>1\),
\[
p_\gamma(x)
\propto
\frac{p_f(x)^\gamma}{p_c(x)^{\gamma-1}}.
\]
Thus, affine temporal extrapolation emphasizes structures supported by the fine
expert relative to the coarse expert.

Under a local Gaussian approximation
\(p_{t_k}(x)\approx\mathcal N(\mu_{t_k},\Sigma_{t_k})\), the affine
log-density has local precision
\[
\Lambda_{\mathrm{aff}}
=
\sum_{k=1}^K\gamma_k\Lambda_{t_k},
\]
provided this matrix is positive definite. If the fine expert has larger local
precision than the coarse experts, and the affine weights place extrapolative
positive mass on the fine expert while subtracting coarser experts, then
\[
\Lambda_{\mathrm{aff}}
=
\Lambda_f+\sum_{k>1}a_k(\Lambda_f-\Lambda_{t_k})
\succeq
\Lambda_f .
\]
This gives local intuition for why emphasizing the fine expert can sharpen the
target and restore class-relevant structure. However, excessive sharpening may
also amplify unstable cross-scale residuals, motivating validation-based
selection of \(\gamma\) and the consistency regularizer used in our ablations.

It should be noted that unlike positive-weight PoE, affine extrapolation does not automatically
preserve positive definiteness. The local Gaussian interpretation is valid only
when \(\Lambda_{\mathrm{aff}}\succ0\). In the two-expert case with
\(\gamma>1\), a sufficient condition is \(\Lambda_f\succeq\Lambda_c\), in which
case
\[
\Lambda_{\mathrm{aff}}
=
\Lambda_f+(\gamma-1)(\Lambda_f-\Lambda_c)
\succeq
\Lambda_f
\succ0.
\]

\subsection{Proof of Proposition 1}\label{appx:proof}
\begin{proof}
By assumption, the fine and coarse temporal experts admit the decompositions
\[
u_f=x^\star+b_f(x^\star)+\xi_f,
\qquad
u_c=x^\star+b_c(x^\star)+\xi_c,
\]
with
\(\mathbb E[\xi_f\mid x^\star]=\mathbb E[\xi_c\mid x^\star]=0\).
The TARO guided target is
\[
u_\gamma
=
u_c+\gamma(u_f-u_c).
\]
Substituting the decompositions of \(u_f\) and \(u_c\), we obtain
\[
u_f-u_c
=
b_f(x^\star)-b_c(x^\star)+\xi_f-\xi_c.
\]
Therefore,
\[
u_\gamma
=
x^\star+b_c(x^\star)+\xi_c
+
\gamma\left(
b_f(x^\star)-b_c(x^\star)+\xi_f-\xi_c
\right).
\]
Rearranging terms gives
\[
u_\gamma
=
x^\star
+
\underbrace{
b_c(x^\star)+\gamma\bigl(b_f(x^\star)-b_c(x^\star)\bigr)
}_{b_\gamma(x^\star)}
+
\underbrace{
\xi_c+\gamma(\xi_f-\xi_c)
}_{\xi_\gamma}.
\]
Thus,
\[
u_\gamma-x^\star
=
b_\gamma(x^\star)+\xi_\gamma.
\]
Taking the squared norm,
\[
\|u_\gamma-x^\star\|_2^2
=
\|b_\gamma(x^\star)+\xi_\gamma\|_2^2.
\]
Expanding,
\[
\|u_\gamma-x^\star\|_2^2
=
\|b_\gamma(x^\star)\|_2^2
+
2b_\gamma(x^\star)^\top \xi_\gamma
+
\|\xi_\gamma\|_2^2.
\]
Now take conditional expectation given \(x^\star\). Since
\[
\xi_\gamma
=
\xi_c+\gamma(\xi_f-\xi_c),
\]
and both \(\xi_f\) and \(\xi_c\) have zero conditional mean, we have
\[
\mathbb E[\xi_\gamma\mid x^\star]
=
\mathbb E[\xi_c\mid x^\star]
+
\gamma
\left(
\mathbb E[\xi_f\mid x^\star]
-
\mathbb E[\xi_c\mid x^\star]
\right)
=
0.
\]
Therefore, the cross term vanishes:
\[
\mathbb E[
2b_\gamma(x^\star)^\top \xi_\gamma
\mid x^\star]
=
2b_\gamma(x^\star)^\top
\mathbb E[\xi_\gamma\mid x^\star]
=
0.
\]
Hence,
\[
\mathbb E[
\|u_\gamma-x^\star\|_2^2
\mid x^\star]
=
\|b_\gamma(x^\star)\|_2^2
+
\mathbb E[
\|\xi_\gamma\|_2^2
\mid x^\star].
\]
Substituting the definitions of \(b_\gamma\) and \(\xi_\gamma\), we obtain
\[
\mathbb E[
\|u_\gamma-x^\star\|_2^2
\mid x^\star]
=
\|b_c+\gamma(b_f-b_c)\|_2^2
+
\mathbb E[
\|\xi_c+\gamma(\xi_f-\xi_c)\|_2^2
\mid x^\star],
\]
where we suppress the explicit dependence of \(b_f\) and \(b_c\) on
\(x^\star\) for readability.

This proves the proposition.
\end{proof}

\subsection{TARO with Adversary-Aware Optimization}
\label{sec:appx_taro_aa}

TARO is compatible with existing optimization-based purification objectives. In this section, we describe TARO-AA, a hybrid variant that integrates the temporal Product-of-Experts prior with the adversarial perturbation likelihood introduced by AAOpt~\citep{wesego2025adversary}. Standard AAOpt optimizes a single-scale diffusion prior together with an auxiliary perturbation model trained on adversarial residuals. TARO-AA replaces the single-scale prior with a multi-scale temporal consensus while retaining the perturbation likelihood. This allows the optimization to exploit two complementary sources of information: the pretrained clean-image diffusion model supplies scale-consistent image structure, while the perturbation model supplies an adversary-aware likelihood over residual artifacts.

\begin{algorithm}[!h]
  \caption{Test-time Optimization with AAOpt + TARO}
  \label{alg:aaopt_taro_combined}
  \begin{algorithmic}
    \State {\bfseries Input:} Input image \(\mathbf{x}_{\mathrm{adv}}\); pretrained score denoiser \(D_\theta\); perturbation network \(\mathbf{d}_\phi\); guidance strength \(\gamma\); perturbation weight \(\lambda_{\mathrm{pert}}\); iterations \(M\); timestep range \([t_{\min},t_{\max}]\); timestep sampling schedule; noise-level schedule \(\{\sigma_t\}_{t=1}^{T}\); perturbation model noise-level schedule \(\{\alpha\}_{t=1}^{T}\);ordered scale multipliers \([\kappa_1,\dots,\kappa_K]\)
    \State \(\mathbf{x}_0 \leftarrow \mathbf{x}_{\mathrm{adv}}\)
    \For{\(i=0\) {\bfseries to} \(M-1\)}
        \State Sample base timestep \(t\) from the predefined timestep schedule
        \For{\(j=1\) {\bfseries to} \(K\)}
            \State 
            \(
            t_j \leftarrow \mathrm{clip}\!\left(\sqrt{t\,\kappa_j},t_{\min},t_{\max}\right)
            \) \Comment{Set effective timestep}
            \State Sample \(\boldsymbol{\epsilon}_j \sim \mathcal N(\mathbf 0,\mathbf I)\)
            \State 
            \(
            \mathbf{n}_j \leftarrow \sigma_{t_j}\boldsymbol{\epsilon}_j
            \)
            \State 
            \(
            \mathbf{x}_{t,j} \leftarrow \mathbf{x}_i+\mathbf{n}_j
            \)
            \State 
            \(
            \mathbf{u}_j \leftarrow D_\theta\!\left(\mathbf{x}_{t,j},\sigma_{t_j}\right)
            \) \Comment{Compute temporal expert}
        \EndFor
        
        \State 
        \(
        \mathbf{x}_{t,\mathrm{TARO}} \leftarrow \mathbf{x}_{t,K}+\sum_{j=1}^{K-1}\gamma^{K-j}(\mathbf{x}_{t,j}-\mathbf{x}_{t,j+1})
        \) \Comment{Form aggregated $\mathbf{x}_{t}$}
        \State 
        \(
        \boldsymbol{\epsilon}_{\mathrm{TARO}} \leftarrow \mathbf{n}_{K}+\sum_{j=1}^{K-1}\gamma^{K-j}(\mathbf{n}_{j}-\mathbf{n}_{j+1})
        \) \Comment{Form aggregated noise}
        \State 
        \(
        \mathbf{u}_{\mathrm{TARO}} \leftarrow \mathbf{u}_{K}+\sum_{j=1}^{K-1}\gamma^{K-j}(\mathbf{u}_{j}-\mathbf{u}_{j+1})
        \) \Comment{Form TARO prediction}
        \State 
        \(
        \hat{\boldsymbol{\epsilon}} \leftarrow (\mathbf{x}_{t,\mathrm{TARO}}-\mathbf{u}_{\mathrm{TARO}})/\sigma_t
        \) \Comment{Estimate prior noise}
        \State 
        \(
        \ell_{\mathrm{prior}} \leftarrow \bigl(\mathrm{sg}(\hat{\boldsymbol{\epsilon}})-\boldsymbol{\epsilon}_{\mathrm{TARO}}\bigr)^\top \mathbf{x}_i
        \) \Comment{Calculate prior loss}
        
        \State \(\boldsymbol{\delta} \leftarrow \mathbf{x}_i-\mathbf{x}_{\mathrm{adv}}\)
        \State Sample \(\boldsymbol{\epsilon}' \sim \mathcal N(\mathbf 0,\mathbf I)\)
        \State 
        \(
        \boldsymbol{\delta}_t \leftarrow \sqrt{\alpha_t}\boldsymbol{\delta}+\sqrt{1-\alpha_t}\boldsymbol{\epsilon}'
        \)
        \State 
        \(
        \ell_{\mathrm{pert}} \leftarrow \bigl(\mathrm{sg}(\mathbf{d}_\phi(\boldsymbol{\delta}_t,t))-\boldsymbol{\epsilon}'\bigr)^\top \mathbf{x}_i
        \) \Comment{Calculate perturbation loss}
        
        \State 
        \(
        \ell \leftarrow \ell_{\mathrm{prior}}+\lambda_{\mathrm{pert}}\ell_{\mathrm{pert}}
        \) \Comment{Combine losses}
        \State 
        \(
        \mathbf{x}_{i+1} \leftarrow \mathrm{OptimizerUpdate}(\mathbf{x}_i,\ell)
        \) \Comment{Update image}
    \EndFor
    \State {\bfseries Return:} Final image \(\mathbf{x}_M\)
  \end{algorithmic}
\end{algorithm}

Let $[k_1,\ldots,k_K]$ denote the stream factors used to construct $K$ effective diffusion timescales. At each optimization step, TARO-AA evaluates the denoiser at these $K$ scales, aggregates the noisy states, injected noises, and denoised predictions using a recursive temporal weighting rule, and uses the resulting multi-scale estimate to define the diffusion prior loss. The perturbation loss is computed as in AAOpt from the residual $\boldsymbol{\delta}=\mathbf{x}-\mathbf{x}_{\mathrm{adv}}$. Algorithm~\ref{alg:aaopt_taro_combined} summarizes the full procedure.

\subsection{Proper Evaluation of Purification with Optimization}
\label{sec:appx_proper_eval}

Optimization-based purification requires a stricter evaluation protocol than standard reverse-sampling diffusion defenses because the purifier is itself an inner optimization procedure. Let $f:\mathbb{R}^d\to\mathbb{R}^K$ be the classifier and let $P:\mathbb{R}^d\to\mathbb{R}^d$ be an optimization-based purifier. Given a clean image $\mathbf{x}$ and perturbation $\boldsymbol{\delta}$ with $\|\boldsymbol{\delta}\|_p\leq\epsilon$, the adversarial input is $\mathbf{x}_{\mathrm{adv}}=\mathbf{x}+\boldsymbol{\delta}$. A correct white-box adversary should optimize the complete purifier--classifier composition:
\begin{equation}
    \max_{\|\boldsymbol{\delta}\|_p\leq\epsilon}
    \mathcal{L}_{\mathrm{cls}}
    \left(
        f(P(\mathbf{x}+\boldsymbol{\delta})),y
    \right).
\end{equation}
Thus, the relevant attack gradient is
\begin{equation}
    \mathbf{g}_{\mathrm{adv}}
    =
    \nabla_{\mathbf{x}_{\mathrm{adv}}}
    \mathcal{L}_{\mathrm{cls}}
    \left(
        f(P(\mathbf{x}_{\mathrm{adv}})),y
    \right),
\end{equation}
rather than the classifier gradient after purification.

For test-time optimization purifiers, $P(\mathbf{x}_{\mathrm{adv}})$ is not a single forward pass. The purifier initializes $\mathbf{z}_0=\mathbf{x}_{\mathrm{adv}}$ and performs $T$ inner updates using a guidance objective $\mathcal{L}_{\mathrm{guide}}$:
\begin{equation}
    \mathbf{z}_{t+1}
    =
    \mathbf{z}_t
    -
    \eta
    \nabla_{\mathbf{z}_t}
    \mathcal{L}_{\mathrm{guide}}(\mathbf{z}_t;\phi),
    \qquad
    t=0,\ldots,T-1,
\end{equation}
where $\phi$ denotes the parameters of the guidance model, such as a diffusion denoiser, score model, CLIP model, or adversarial perturbation model. The final purified image is $\mathbf{x}^{\star}=P(\mathbf{x}_{\mathrm{adv}})=\mathbf{z}_T$.

A correct attack must differentiate through the full dependence of $\mathbf{z}_T$ on $\mathbf{z}_0=\mathbf{x}_{\mathrm{adv}}$. Applying the chain rule gives
\begin{equation}
    \frac{\partial \mathcal{L}_{\mathrm{cls}}}{\partial \mathbf{x}_{\mathrm{adv}}}
    =
    \frac{\partial \mathcal{L}_{\mathrm{cls}}}{\partial f}
    \frac{\partial f}{\partial \mathbf{z}_T}
    \frac{\partial \mathbf{z}_T}{\partial \mathbf{z}_0}.
\end{equation}
The final factor is the trajectory Jacobian of the inner purification process. For the gradient-descent update above, each local Jacobian satisfies
\begin{equation}
    \frac{\partial \mathbf{z}_{t+1}}{\partial \mathbf{z}_t}
    =
    \mathbf{I}
    -
    \eta
    \nabla_{\mathbf{z}_t}^{2}
    \mathcal{L}_{\mathrm{guide}}(\mathbf{z}_t;\phi).
\end{equation}
Therefore, differentiating through an optimization-based purifier requires preserving second-order information from the inner guidance gradient. More generally, if the purifier uses an optimizer update
\begin{equation}
    \mathbf{z}_{t+1}
    =
    U_t(\mathbf{z}_t,\mathbf{g}_t),
    \qquad
    \mathbf{g}_t
    =
    \nabla_{\mathbf{z}_t}
    \mathcal{L}_{\mathrm{guide}}(\mathbf{z}_t;\phi),
\end{equation}
then the chain rule gives
\begin{equation}
    \frac{\partial \mathbf{z}_{t+1}}{\partial \mathbf{z}_t}
    =
    \frac{\partial U_t}{\partial \mathbf{z}_t}
    +
    \frac{\partial U_t}{\partial \mathbf{g}_t}
    \nabla_{\mathbf{z}_t}^{2}
    \mathcal{L}_{\mathrm{guide}}(\mathbf{z}_t;\phi).
\end{equation}
Thus, when the inner guidance gradient is treated as a constant, the attack omits the Hessian-dependent term and no longer corresponds to a true white-box attack on the optimization-based purifier.

This failure mode is easy to introduce in PyTorch. If the guidance gradient is computed inside the purification loop with the default setting,
\begin{equation}
    \texttt{torch.autograd.grad}
    \left(
        \mathcal{L}_{\mathrm{guide}},
        \mathbf{z}_t
    \right),
\end{equation}
then \texttt{create\_graph=False} by default, and the returned gradient is detached from the graph needed by the outer attack. The inner update still changes the purified image, but the outer adversary cannot differentiate through how this update changes as $\mathbf{x}_{\mathrm{adv}}$ changes. In effect, the attacker sees an incomplete surrogate of the purifier. A graph-consistent unrolled evaluation must instead preserve the higher-order path:
\begin{equation}
    \texttt{torch.autograd.grad}
    \left(
        \mathcal{L}_{\mathrm{guide}},
        \mathbf{z}_t,
        \texttt{create\_graph=True},
        \texttt{retain\_graph=True}
    \right).
\end{equation}
Here, \texttt{create\_graph=True} is the essential flag because it keeps the gradient computation differentiable; \texttt{retain\_graph=True} is needed when the same graph is reused for additional backward passes.

This graph-break issue is distinct from stochastic gradient masking. Diffusion purifiers often contain stochastic denoising steps, so expectation over transformation (EOT) is required to average gradients over noise realizations. However, EOT alone does not repair a detached inner optimization graph. A defense may use many EOT samples and still be mis-evaluated if the attacker cannot differentiate through the optimization trajectory that produced each purified sample. Therefore, optimization-based purification requires both stochasticity-aware evaluation and graph-consistent differentiation.

Our evaluation follows this principle. PGD-EOT attacks the full purifier--classifier composition, while full reverse-optimization AutoAttack-EOT in Appendix~\ref{sec:autoattack_eot} provides an additional check in which the attack differentiates through the complete optimization trajectory. The corrected CLIPure ~\cite{zhang2025clipure} case study below illustrates why this requirement is essential.

\subsection{Corrected Evaluation Case Study: CLIPure}
\label{sec:appx_clipure_eval}

We revisit CLIPure ~\cite{zhang2025clipure} as a case study in graph-consistent evaluation. CLIPure performs purification through test-time optimization using either a cosine-similarity objective in CLIP space or a diffusion-based objective. Since the purified image is produced by an inner optimization loop, a white-box attack must differentiate through that loop. If the inner guidance gradient is computed without preserving higher-order dependencies, the purification update becomes partially detached from the adversarial input, and the attack receives an incomplete gradient.

This produces accidental gradient masking. The defense appears robust not because the purified image is invariant to adversarial perturbations, but because the attacker is unable to observe how the purification trajectory changes as $\mathbf{x}_{\mathrm{adv}}$ changes. Restoring the computational graph through the inner loop exposes the true bilevel structure of the attack.

Table~\ref{tab:clipure_corrected_eval} compares the originally reported CLIPure robust accuracies with corrected graph-consistent evaluation on CIFAR-10 under AutoAttack with $\ell_\infty$ perturbations and $\epsilon=8/255$. Under corrected evaluation, CLIPure-Cos drops from 91.1\% to 0.0\%, and CLIPure-Diff drops from 88.0\% to 0.0\%.

\begin{table}[!h]
\centering
\caption{Reported versus corrected robust accuracy for CLIPure variants on CIFAR-10 under AutoAttack ($\ell_\infty$, $\epsilon=8/255$). Corrected evaluation preserves the computational graph through the inner optimization loop.}
\label{tab:clipure_corrected_eval}
\small
\setlength{\tabcolsep}{6pt}
\begin{tabular}{@{}l c c@{}}
\toprule
Method & Reported Robust Acc. & Corrected Robust Acc. \\
\midrule
CLIPure-Cos & 91.1\% & 0.0\% \\
CLIPure-Diff & 88.0\% & 0.0\% \\
\bottomrule
\end{tabular}
\end{table}

These results show that optimization-based purification can be severely mis-evaluated when higher-order gradient flow is not preserved. They do not imply that test-time optimization defenses are inherently ineffective; rather, they show that robustness must be established under graph-consistent attacks. In the next section, we therefore report AutoAttack-EOT with full reverse optimization for TARO and the strongest optimization baselines, providing a stricter adaptive evaluation beyond the one-step gradient approximation used in the main PGD-EOT experiments.

\begin{table}[!h]
\centering
\caption{Standard and robust accuracy on CIFAR-10 using WRN-70-16 ($\ell_\infty$, $\epsilon=8/255$) under PGD-EOT-20. Baselines include adversarial training (AT), adversarial purification, and test-time optimization purification.}
\label{tab:pgd_cifar10_linf_wrn70}
\small
\centering
\begin{tabular}{@{}l l c c@{}}
\toprule
Type & Method & Clean & Robust \\
\midrule
& Default & 95.19 & 0.0 \\
\midrule
\multirow{2}{*}{AT} 
& \citet{gowal2021improving} & 88.75 & 69.03 \\
& \citet{wang2023better} & 92.97 & 72.46 \\
\midrule
\multirow{5}{*}{AP} 
& \citet{yoon2021adp} & 86.76 & 41.02 \\
& \citet{Nie2022DiffPure} & 92.15 & 57.03 \\
& \citet{chen_robclsDiff} & 87.89 & 71.68 \\
& Score-Opt-O & 92.61 & 68.86 \\
& Score-Opt-N & 95.11 & 70.20 \\
& \citet{rob_diff_guang} & 90.42 & 66.41 \\
\midrule
& AAOpt & 91.66$\scriptstyle\pm1.2$ & 85.05$\scriptstyle\pm0.8$ \\
\midrule
& TARO-2 & 93.84$\scriptstyle\pm1.4$ & \textbf{88.56}$\scriptstyle\pm1.3$ \\
& TARO-3 & 93.55$\scriptstyle\pm1.1$ & 88.37$\scriptstyle\pm1.1$ \\
& TARO-2-AA & 92.80$\scriptstyle\pm0.9$ & 86.91$\scriptstyle\pm1.1$ \\
& TARO-3-AA & 93.84$\scriptstyle\pm1.0$ & 87.59$\scriptstyle\pm1.1$ \\
\bottomrule
\end{tabular}
\end{table}

\subsection{Additional Results on CIFAR-10 Attack Using WRN-70-16 Classifier}
\label{sec:appx_wrn_70_16}

In addition to the main WRN-28-10 evaluation, we report results using a second classifier architecture in this section. Table~\ref{tab:pgd_cifar10_linf_wrn70} evaluates CIFAR-10 under PGD-EOT-20 with a WRN-70-16 classifier, providing an additional check that TARO's gains are not tied to a single backbone. TARO remains strong under this larger classifier: TARO-2 achieves the best robust accuracy at $88.56\%$ while maintaining high clean accuracy, and TARO-3 obtains a comparable robust accuracy of $88.37\%$. Both variants substantially improve over AAOpt and prior purification baselines, and they also exceed the robust accuracy of adversarially trained WRN-70-16 classifiers listed in the table. The hybrid TARO-AA variants remain competitive, but the zero-shot TARO variants perform best in this $\ell_\infty$ setting. Overall, the WRN-70-16 results confirm that temporal multi-scale purification transfers across classifier backbones and remains competitive with both adversarial training and prior purification methods.

\subsection{Imagenette Results}
\label{sec:imagenette_results}

In addition to the CIFAR datasets, we evaluate TARO on Imagenette, a higher-resolution subset of ImageNet. Table~\ref{tab:pgd_eot_imagenette_taro} shows that TARO remains effective on a more complex image distribution. TARO-2 achieves the highest robust accuracy, while maintaining clean accuracy close to the undefended classifier and improving substantially over Score-Opt-O. TARO-2-AA also improves over AAOpt in clean accuracy and remains competitive in robustness. These results indicate that temporal rectification is not limited to low-resolution CIFAR images and can scale to more challenging image distributions without severe clean-accuracy degradation.

\begin{table}[!h]
  \centering
  \small
  \parbox[t]{0.99\textwidth}{
  \centering

  \captionof{table}{PGD-EOT attack on Imagenette ($\ell_\infty$, $\epsilon=8/255$), WRN-28-10.}
  \label{tab:pgd_eot_imagenette_taro}
  \setlength{\tabcolsep}{5pt}
  \begin{tabular}{@{}l c c@{}}
  \toprule
  Method & Clean & Robust \\
  \midrule
  Default & 82.9 & 0.0 \\
  \midrule
  Score-Opt-O & 76.4$\scriptstyle\pm1.2$ & 69.5$\scriptstyle\pm1.7$ \\
  AAOpt & 80.4$\scriptstyle\pm1.4$ & 66.4$\scriptstyle\pm1.9$ \\
  \midrule
  TARO-2 & 81.6$\scriptstyle\pm1.4$ & \textbf{72.3}$\scriptstyle\pm2.2$ \\
  TARO-2-AA & 81.2$\scriptstyle\pm1.6$ & 67.3$\scriptstyle\pm1.5$ \\
  \bottomrule
  \end{tabular}
  
  }
\end{table}

\subsection{Time-Scale Scheduling Ablation}
\label{sec:time_scale_ablation}

We further evaluate the generality of TARO across different time-scale scheduling choices. The core mechanism of TARO is multi-scale temporal aggregation: the purifier combines complementary denoising experts evaluated at different noise regimes. This mechanism should not depend on a single hand-designed schedule for selecting the noisier expert. In the main experiments, we use the square-root schedule $\tau=\sqrt{2t}$, which scales the variance-like diffusion parameter by a factor of $2$. To test whether TARO remains effective under an alternative parameterization, we also evaluate a linear schedule that directly scales the diffusion time as $\tau=2t$ for the noisier expert. Table~\ref{tab:time_scale_ablation_cifar10_linf} shows that TARO remains robust under both schedules, with the linear schedule achieving comparable clean accuracy and slightly higher robust accuracy. These results indicate that TARO's gains arise from temporal multi-scale rectification rather than a brittle dependence on a particular time-scaling rule.

\begin{table}[!h]
\centering
\caption{Time-scale scheduling ablation on CIFAR-10 under PGD-EOT-20 ($\ell_\infty$, $\epsilon=8/255$), WRN-28-10. The square-root schedule uses $\tau=\sqrt{2t}$ for the noisy expert, while the linear schedule uses $\tau=2t$.}
\label{tab:time_scale_ablation_cifar10_linf}
\small
\setlength{\tabcolsep}{6pt}
\begin{tabular}{@{}l c c@{}}
\toprule
Time-Scale Schedule & Clean Accuracy & Robust Accuracy \\
\midrule
Square-root scaling: $\tau=\sqrt{2t}$ & 93.41$\scriptstyle\pm1.1$ & 88.27$\scriptstyle\pm1.6$ \\
Linear scaling: $\tau=2t$ & 93.87$\scriptstyle\pm1.0$ & {89.05}$\scriptstyle\pm1.4$ \\
\bottomrule
\end{tabular}
\end{table}

\subsection{Robustness Against Common Corruptions}
\label{sec:corr}

We evaluate robustness to natural distribution shifts using CIFAR-10-C. Unlike adversarial attacks, these corruptions are not optimized against the classifier or purifier, but they test whether the multi-scale prior also improves robustness to common image degradations. Table~\ref{tab:robustness_cifar10c} reports accuracy on five corruption types using WRN-70-16. TARO-2 achieves the highest mean accuracy, outperforming Score-Opt-O and AAOpt. The improvement is especially strong on Gaussian noise, where TARO-2 reaches 92.0\% accuracy. The pure TARO-2 variant also slightly outperforms TARO-2-AA, suggesting that the unconstrained multi-scale consensus is particularly effective for non-adversarial corruptions where an adversarial residual likelihood is unnecessary.

\begin{table}[!h]
\centering
\caption{Robustness against common corruptions on CIFAR-10-C (WRN-70-16). The \textbf{Mean} column averages across the five corruption types.}
\label{tab:robustness_cifar10c}
\small
\setlength{\tabcolsep}{4pt}
\begin{tabular}{@{}lcccccc@{}}
\toprule
Method & Gaussian & Elastic & JPEG & Snow & Brightness & \textbf{Mean} \\
\midrule
Default & 45.6 & 82.9 & 75.8 & 83.5 & 93.8 & 76.3 \\
Score-Opt-O & 79.8 & 78.5 & 86.1 & 82.7 & 89.6 & 83.3 \\
AAOpt & 85.3 & 84.8 & 89.5 & 85.3 & 90.5 & 87.1 \\
\midrule
TARO-2 & \textbf{92.0} & 84.3 & \textbf{90.5} & \textbf{87.3} & \textbf{91.3} & \textbf{89.1} \\
TARO-2-AA & 91.3 & 81.1 & 88.6 & 86.5 & 89.4 & 87.4 \\
\bottomrule
\end{tabular}
\end{table}

\subsection{Computational Resources and Runtime}
\label{sec:compute}

Our experiments were conducted on NVIDIA A100 GPUs. Table~\ref{tab:optimization_speed} reports the wall-clock time required to complete a 5-step purification trajectory on CIFAR-10 with batch size 128. TARO introduces a modest increase in runtime relative to single-scale baselines because it evaluates multiple denoising experts per optimization step. However, this additional cost remains practical and is substantially smaller than the cost of increasing defense EOT or repeatedly extending a single-scale optimization trajectory.

\begin{table}[!h]
\centering
\caption{Runtime of purification optimization on CIFAR-10 using an NVIDIA A100 GPU.}
\label{tab:optimization_speed}
\small
\setlength{\tabcolsep}{6pt}
\begin{tabular}{@{}l c c@{}}
\toprule
Model & Batch Size & Time \\
\midrule
Score-Opt-O & 128 & 1.16 sec. \\
Score-Opt-N & 128 & 1.26 sec. \\
AAOpt & 128 & 1.05 sec. \\
TARO-2 & 128 & 1.38 sec. \\
TARO-2-AA & 128 & 1.45 sec. \\
\bottomrule
\end{tabular}
\end{table}

\subsection{Experiment Details}
\label{sec:exp_details}

We use 512 samples for each trial unless otherwise stated and report mean and standard deviation over trials. The DiffBreak LF evaluation is run on 64 samples averaged over 3 trials due to the cost of sequential EOT and 100 attack steps. For the main PGD-EOT experiments, we use 20 EOT samples for both attack and defense across TARO and directly reimplemented baselines. This differs from the original AAOpt evaluation, which uses 20 EOT samples for attack but a larger number of defense EOT samples.

For TARO-$K$, we use stream factors $[t,2]$ for $K=2$ and $[t,1,2]$ for $K=3$. Tables~\ref{tab:hyperparameter_taro2}, \ref{tab:hyperparameter_taro3}, and~\ref{tab:hyperparameter_taro2_aa} summarize the main hyperparameters used in our experiments. Additional implementation details are included in the supplementary code.

\begin{table}[!h]
\centering
\caption{Hyperparameters used for TARO-2.}
\label{tab:hyperparameter_taro2}
\small
\setlength{\tabcolsep}{4pt}
\begin{tabular}{@{}l c c c c c@{}}
\toprule
Attack Type & Budget & LR & Iter. & $\gamma$ & Timesteps \\
\midrule
\multicolumn{6}{c}{\textbf{CIFAR-10}} \\
\midrule
PGD+EOT & $\ell_\infty$ $(\epsilon=8/255)$ & 0.02 & 20 & 1.4 & $[0.25,\ldots,0]$ \\
PGD+EOT & $\ell_2$ $(\epsilon=0.5)$ & 0.02 & 20 & 1.4 & $[0.25,\ldots,0]$ \\
BPDA+EOT & $\ell_\infty$ $(\epsilon=8/255)$ & 0.02 & 20 & 1.4 & $[0.5,\ldots,0]$ \\
LF+EOT & -- & 0.02 & 20 & 1.4 & $[0.25,\ldots,0]$ \\
\midrule
\multicolumn{6}{c}{\textbf{CIFAR-100}} \\
\midrule
BPDA+EOT & $\ell_\infty$ $(\epsilon=8/255)$ & 0.1 & 3 & 1.4 & $[0.25,\ldots,0]$ \\
\midrule
\multicolumn{6}{c}{\textbf{CIFAR-10-C}} \\
\midrule
Corruptions & -- & 0.02 & 20 & 1.4 & $[0.25,\ldots,0]$ \\
\bottomrule
\end{tabular}
\end{table}

\begin{table}[!h]
\centering
\caption{Hyperparameters used for TARO-3.}
\label{tab:hyperparameter_taro3}
\small
\setlength{\tabcolsep}{4pt}
\begin{tabular}{@{}l c c c c c@{}}
\toprule
Attack Type & Budget & LR & Iter. & $\gamma$ & Timesteps \\
\midrule
\multicolumn{6}{c}{\textbf{CIFAR-10}} \\
\midrule
PGD+EOT & $\ell_\infty$ $(\epsilon=8/255)$ & 0.02 & 20 & 1.1 & $[0.25,\ldots,0]$ \\
PGD+EOT & $\ell_2$ $(\epsilon=0.5)$ & 0.02 & 20 & 1.1 & $[0.25,\ldots,0]$ \\
BPDA+EOT & $\ell_\infty$ $(\epsilon=8/255)$ & 0.02 & 20 & 1.1 & $[0.5,\ldots,0]$ \\
\midrule
\multicolumn{6}{c}{\textbf{CIFAR-100}} \\
\midrule
BPDA+EOT & $\ell_\infty$ $(\epsilon=8/255)$ & 0.1 & 3 & 1.1 & $[0.25,\ldots,0]$ \\
\bottomrule
\end{tabular}
\end{table}

\begin{table}[!h]
\centering
\caption{Hyperparameters used for TARO-2-AA and TARO-3-AA.}
\label{tab:hyperparameter_taro2_aa}
\small
\setlength{\tabcolsep}{3pt}
\begin{tabular}{@{}l c c c c c c@{}}
\toprule
Attack Type & Budget & LR & Iter. & $\gamma$ & $\lambda_{\mathrm{pert}}$ & Timesteps \\
\midrule
\multicolumn{7}{c}{\textbf{CIFAR-10}} \\
\midrule
PGD+EOT & $\ell_\infty$ $(\epsilon=8/255)$ & 0.1 & 20 & 1.1 & 0.25 & Uniform$(0.15,0.35)$ \\
PGD+EOT & $\ell_2$ $(\epsilon=0.5)$ & 0.1 & 20 & 1.1 & 0.25 & Uniform$(0.15,0.35)$ \\
BPDA+EOT & $\ell_\infty$ $(\epsilon=8/255)$ & 0.02 & 20 & 1.1 & 0.25 & $[0.25,\ldots,0]$ \\
LF+EOT & -- & 0.02 & 20 & 1.1 & 0.25 & $[0.25,\ldots,0]$ \\
\midrule
\multicolumn{7}{c}{\textbf{CIFAR-100}} \\
\midrule
BPDA+EOT & $\ell_\infty$ $(\epsilon=8/255)$ & 0.02 & 20 & 1.1 & 0.4 & $[0.15,\ldots,0]$ \\
\midrule
\multicolumn{7}{c}{\textbf{CIFAR-10-C}} \\
\midrule
Corruptions & -- & 0.02 & 20 & 1.1 & 0.1 & $[0.25,\ldots,0]$ \\
\bottomrule
\end{tabular}
\end{table}

\subsection{Limitations and Discussion}
\label{sec:appx_limitations}

TARO improves diffusion-based test-time purification by aggregating denoising
estimates across multiple temporal regimes, but it has several limitations.
First, TARO is an empirical defense and does not provide certified robustness. The temporal PoE and affine aggregation interpretations provide intuition for why multi-scale denoising can improve purification, but the local Gaussian view is only a conditional explanation: it relies on assumptions such as approximate local Gaussianity, positive definiteness of the signed precision, and suitable ordering between fine and coarse experts. These arguments should therefore be viewed as explanations of the optimization behavior rather than formal
robustness guarantees.

Second, TARO increases inference cost relative to single-scale purification
methods because each optimization step evaluates multiple denoising experts.
Our runtime ablations suggest that this additional cost is moderate. Nevertheless, TARO remains more expensive than a standard classifier forward pass or a single denoising trajectory. This cost becomes more significant under stronger adaptive evaluations such as full-reverse AutoAttack-EOT and DiffBreak, where repeated stochastic evaluations are
required.

Third, TARO's performance depends on the quality of the pretrained diffusion
prior, the downstream classifier, and the choice of optimization
hyperparameters. In particular, the correction strength \(\gamma\) and the use of consistency regularization can depend on the threat model: our experiments show different behavior under \(\ell_\infty\) and \(\ell_2\) attacks. This suggests that automatic selection of temporal weights and stabilization strengths is an important direction for future work.

Finally, TARO is designed and evaluated as a test-time image purification
method for classification. Extending the temporal multi-expert prior to other
modalities, detection, segmentation, inverse problems, or certified defenses
remains an important direction for future work.

\paragraph{Societal impact.}
TARO contributes to the development of more reliable machine learning systems by improving robustness against adversarial manipulation at inference time, which is important for safety-critical applications where models may be exposed to malicious inputs. The proposed proper and reliable evaluation protocol also has a positive societal impact by discouraging misleading robustness claims and promoting more reliable security evaluation. At the same time, stronger purification defenses may increase the computational cost and energy use of deployed systems, especially when repeated stochastic evaluations are required. We view TARO as part of a broader effort toward transparent, reproducible, and carefully evaluated defense rather than as a complete solution to adversarial risk.


\end{document}